\theoremstyle{plain}
\newtheorem{theorem}{Theorem}[section]
\newtheorem{corollary}[theorem]{Corollary}
\theoremstyle{definition}
\theoremstyle{remark}
\newcommand{\defhighlighter}[3][]{%
  \tikzset{every highlighter/.style={color=#2, fill opacity=#3, #1}}%
}
\newcommand{\highlight@DoHighlight}{
  \fill [ decoration = {random steps, amplitude=1pt, segment length=15pt}
        , outer sep = -15pt, inner sep = 0pt, decorate
        , every highlighter, this highlighter ]
        ($(begin highlight)+(0,8pt)$) rectangle ($(end highlight)+(0,-3pt)$) ;
}
\newcommand{\highlight@BeginHighlight}{
  \coordinate (begin highlight) at (0,0) ;
}
\newcommand{\highlight@EndHighlight}{
  \coordinate (end highlight) at (0,0) ;
}
\newdimen\highlight@previous
\newdimen\highlight@current
\DeclareRobustCommand*\highlight[1][]{%
  \tikzset{this highlighter/.style={#1}}%
  \SOUL@setup
  \def\SOUL@preamble{%
    \begin{tikzpicture}[overlay, remember picture]
      \highlight@BeginHighlight
      \highlight@EndHighlight
    \end{tikzpicture}%
  }%
  \def\SOUL@postamble{%
    \begin{tikzpicture}[overlay, remember picture]
      \highlight@EndHighlight
      \highlight@DoHighlight
    \end{tikzpicture}%
  }%
  \def\SOUL@everyhyphen{%
    \discretionary{%
      \SOUL@setkern\SOUL@hyphkern
      \SOUL@sethyphenchar
      \tikz[overlay, remember picture] \highlight@EndHighlight ;%
    }{%
    }{%
      \SOUL@setkern\SOUL@charkern
    }%
  }%
  \def\SOUL@everyexhyphen##1{%
    \SOUL@setkern\SOUL@hyphkern
    \hbox{##1}%
    \discretionary{%
      \tikz[overlay, remember picture] \highlight@EndHighlight ;%
    }{%
    }{%
      \SOUL@setkern\SOUL@charkern
    }%
  }%
  \def\SOUL@everysyllable{%
    \begin{tikzpicture}[overlay, remember picture]
      \path let \p0 = (begin highlight), \p1 = (0,0) in \pgfextra
        \global\highlight@previous=\y0
        \global\highlight@current =\y1
      \endpgfextra (0,0) ;
      \ifdim\highlight@current < \highlight@previous
        \highlight@DoHighlight
        \highlight@BeginHighlight
      \fi
    \end{tikzpicture}%
    \the\SOUL@syllable
    \tikz[overlay, remember picture] \highlight@EndHighlight ;%
  }%
  \SOUL@
}
\newcommand\bcolorbox[2]{\leavevmode\bcbaux{#1}#2 \endbcb}
\def\bcbaux#1#2 #3\endbcb{%
  \colorbox{#1}{\strut#2}%
  \ifx\relax#3\relax\def\next{}\else%
    \sethlcolor{#1}%
    \hl{ }%
    \def\next{\bcbaux{#1}#3\endbcb}%
  \fi%
  \next%
}
\def\SOUL@hlpreamble{%
    \setul{-\ht\strutbox}{\baselineskip}
    \let\SOUL@stcolor\SOUL@hlcolor
    \SOUL@stpreamble
}
\def\SOUL@stpreamble{%
    \dimen@\SOUL@ulthickness
    \dimen@i=-.5ex
    \advance\dimen@i-.5\dimen@
    \let\SOUL@ulcolor\SOUL@stcolor
    \SOUL@ulpreamble
}
\def\@onedot{\ifx\@let@token.\else.\null\fi\xspace}
\DeclareRobustCommand\onedot{\futurelet\@let@token\@onedot}
\renewcommand{\[}{\begin{eqnarray}}
\renewcommand{\]}{\end{eqnarray}}
\newcommand{\R}{\mathbb{R}}
\newcommand{\E}{\mathbb{E}}
\newcommand{\X}{\cal X}
\newcommand{\Y}{\cal Y}
\newcommand{\N}{{\cal N}}
\newcommand{\I}{{\cal I}}
\DeclareMathOperator{\ent}{Ent}
\newcommand{\figref}[1]{Figure~\ref{#1}}
\newcommand{\equref}[1]{Eq\onedot~\eqref{#1}}
\newcommand\norm[1]{\left\lVert#1\right\rVert}
\icmltitlerunning{A Functional Information Perspective on Model Interpretation}
\begin{document}

\twocolumn[
\icmltitle{A Functional Information Perspective on Model Interpretation}



\icmlsetsymbol{equal}{*}

\begin{icmlauthorlist}
\icmlauthor{Itai Gat}{technion}
\icmlauthor{Nitay Calderon}{technion}
\icmlauthor{Roi Reichart}{technion}
\icmlauthor{Tamir Hazan}{technion}
\end{icmlauthorlist}

\icmlaffiliation{technion}{Technion - Israel Institute of Technology}

\icmlcorrespondingauthor{Itai Gat}{itaigat@technion.ac.il}

\icmlkeywords{Machine Learning, ICML}

\vskip 0.3in
]



\printAffiliationsAndNotice{}  

\begin{abstract}
    Contemporary predictive models are hard to interpret as their deep nets exploit  numerous complex relations between input elements. This work suggests a theoretical framework for model interpretability by measuring the contribution of relevant features to the functional entropy of the network with respect to the input. We rely on the log-Sobolev inequality that bounds the functional entropy by the functional Fisher information with respect to the covariance of the data. This provides a principled way to measure the amount of information contribution of a subset of features to the decision function. Through extensive experiments, we show that our method surpasses existing interpretability sampling-based methods on various data signals such as image, text, and audio.
\end{abstract}

\section{Introduction}
Machine learning is ubiquitous these days, and its impact on everyday life is substantial. Supervised learning algorithms are instrumental to autonomous driving~\citep{lavin2016fast, bojarski2016end, luss2019generating}, serving people with disabilities~\citep{tadmor2016learning}, improve hearing aids~\citep{Hurtley645, fedorov2020tinylstms, green2022speech}, and are being extensively used in medical diagnosis~\cite{deo2015machine, ophir2020deep, zhou2021review}. Unfortunately, since the models that are achieving these advancements are complex, their decisions are usually not well-understood by their operators. Consequently, model interpretability is becoming an important goal in contemporary deep nets research. To facilitate interpretability, in this work, we provide a theoretical framework that allows measuring the information of a decision function by considering the expected ratio between its gradient norm and its value.

Gradient-based approaches are widely used to interpret the model predictions since they bring forth an insight into the internal mechanism of the model~\citep{saliency, gradcam, integrated, deeplift, latentexpl, gradientxinput, guidedbackprop}. These methods produce different explanation maps using the gradient of a class-related output with respect to its input data. While gradient-based visualizations excel at providing class-specific explanations, these explanations could be noisy in part due to local variations in partial derivatives.

In order to overcome such variations, \citet{smoothgrad, adebayo2018sanity} propose to compute the expected output of gradient-based methods with respect to their input. This practice is known as the sampling approach for interpretability methods. Existing sampling-based methods rely on the assumption that the features are uncorrelated, while real-world data such as pixels in images and words in texts are in fact correlated.

We provide a theoretical framework that applies functional entropy as a guiding concept to the amount of information a given deep net holds for a given input with respect to any of the possible labels. The functional entropy \citep{Bakry14}, originated from functional analysis, measures the possibility of change that is encapsulated in any decision function with respect to its input. The functional entropy contrasts the well-known Shannon's entropy that measures the capacity of probability distributions. We relate the functional entropy to the functional Fisher information that measures the amount of expansion in the decision function through its gradient norm with respect to its distribution. The functional Fisher information is defined for non-negative functions in contrast to the Fisher information, which is defined over probability density functions. 

With these mathematical concepts, we are able to construct a sampling framework for gradient-based explanation methods. The connection between functional entropy and functional Fisher information signifies the importance of the covariance encapsulated in the data. It also allows us to seamlessly extract information of a subset of features from their correlated distribution. 

The remainder of the work is organized as follows: In Sec.~\ref{sec:background} we present the notions of functional entropy and functional Fisher information. Next, in Sec.~\ref{sec:method}, we present the role of the data covariance matrix when relating these two functional analysis operators. We use the data correlations to extract the relevant importance of a subset of the input. Lastly, in Sec.~\ref{sec:experiments}, we demonstrate the effectiveness of our approach on diverse data modalities: vision, text, and audio. We study our approach both quantitatively and qualitatively. We visualize our method's scores in the qualitative experiments and compare them to current explainability sampling-based methods with negative perturbations in the quantitative experiments. Our extensive experiments show that, compared to current methods, our method can better identify the features that drive the predictions of the model and quantitatively estimate their importance. In addition, we compare text explanations of sampling-based methods for different types of architectures. This analysis sheds light on the differences between the architectures and, in particular, highlights the difficulties in interpreting transformer-based models.

In summary, our contributions in this work are: (1) We propose a mathematical framework that measures the amount of information in deep nets. Our framework provides information-theoretical grounds for feature attribution sampling-based methods (Theorem~\ref{thm:sobolev_dependent}). Our method uses the functional Fisher information and suggests that the covariance of the data should be considered. (2) We present a novel approach for sampling explanations of a subset of features (Theorem~\ref{thm:sobolev_set}). According to our method, one should use dependent conditional sampling for explanations of feature subsets. (3) We conduct extensive experiments to evaluate our method on various data modalities, quantitatively and qualitatively. Our code is available at \url{https://github.com/nitaytech/FunctionalExplanation}.

\section{Related Work}\label{sec:related_work}

Over the years, many gradient-based methods were developed to interpret decisions of deep nets. An explanation of a decision function $f_y$ should associate features with scores reflecting their impact on the decision. Intuitively, the gradients of a decision function with respect to the features, $\nabla f_y(x)$ represent the extent to which perturbations in each feature would change the value of the function, i.e., the impact of the feature $x_i$ on to the output $y$ is then the partial gradient $\nabla f_y(x)_i$. Gradient-based methods are considered white-box since they allow to inspect the internal mechanism of the decision function. In contrast, black-box methods like LIME~\cite{lime} are not based on gradients but instead interpret the model's prediction for a given data point based on a local linear approximation around this input. Another important black-box method is SHapleyAdditive exPlanations (SHAP)~\cite{shap}, introducing a unified game-theoretic framework for attribution methods based on Shapley values~\cite{shapley}.

\paragraph{Gradient-based methods.} Early gradient-based methods include Saliency maps~\cite{saliency}, GradCAM~\cite{gradcam}, and Guided backpropagation~\cite{guidedbackprop}. These methods use the values of the deep net gradients with respect to the features or the latent space. Interestingly, \citet{nie2018theoretical} show that while the visualizations produced by the guided backpropagation method are impressive, it does not explain well the model predictions. 

Gradient-based methods typically do not account for potential correlations between the features since they estimate the impact of a feature on the prediction using the partial derivative with respect to this feature. A number of methods address this challenge, for example, Gradient$\times$Input~\citep{gradientxinput} and Integrated Gradients~\citep{integrated} multiply the partial derivatives by the input itself. In contrast, we consider the covariance of the data in order to account for potential correlations between the features explicitly. Notably, the integration of the covariance into our score is naturally derived from the functional Fisher information component of our framework.

\paragraph{Sampling-based gradient methods.} There is no guarantee that the gradients of $f_y$ vary smoothly, and they may fluctuate sharply at small scales~\citep{smoothgrad}. Therefore, explanations based on raw gradients are noisy and might emphasize meaningless local variations in the partial derivatives. To overcome this, ~\citet{smoothgrad} proposes SmoothGrad. This method computes $\mathbb{E}_{z\sim\nu}[\nabla f_y(z)_i]$, which is estimated by calculating the average gradients of mutually independent Gaussians $\nu$ centered around the input $x$. Since gradients have signed values, their interpretation is ambiguous. Therefore,~\citet{smoothgrad} proposed the SmoothGradSQ method, which considering the absolute or squared values of the gradients: $\mathbb{E}_{z\sim\nu}[\nabla f_y(z)_i^2]$. Later,~\citet{adebayo2018sanity} proposed VarGrad, estimating the importance of a feature by computing the variance of its partial derivation under Gaussian perturbations. These methods require sampling to compute the expectation, a property that our method shares.

Although gradient-based methods are intuitive, and despite the good visualizations they provide, there is no theoretical framework that explains how they quantify the impact of the features on the decision function. In this work, we introduce such a theoretical framework. Our framework applies the functional entropy as a guiding concept for computing the contribution of each feature to the decision function. Since computing the functional entropy is intractable, we turn to approximation with the functional Fisher information. Our mathematical framework explicitly accounts for correlations between subsets of the data in order to provide a more reliable estimate of the expectations considered by an explanation method. 

\paragraph{Information in deep nets explanation.} Information theory aspects have been studied in deep neural networks. A well-known line of work focuses on the information bottleneck criterion~\cite{information_bottleneck, information2}. This criterion is designed to maximize the mutual information of the latent representation of the input and with the label. Learning to explain approach (L2X, ~\cite{l2x}) takes a different view and uses the mutual information between the input features and each of the labels to select the most informative features for the prediction. These works require to model the joint distribution of features and labels.

Our work uses a functional analysis perspective about information and measures the amount of information in the decision function. Our functional analysis view allows us to avoid modeling the joint probability distributions of features and labels. Recent work by~\citet{removing_bias} propose to maximize functional entropy of multi-modal data in order to remove modal-specific biases and facilitate out-of-distribution generalization. Unlike our work, they aim to improve generalization, and their method does not consider the covariance of the input features.

\section{Background}\label{sec:background}
A discriminative learner constructs a mapping between a data instance $x \in \X$ and a label $y \in \Y$ given training data $S = \{(x_1,y_1),...,(x_m,y_m)\}$. For example, in an object recognition task, $x$ is an image that is composed of pixels. Each pixel consists of a three-dimensional vector that represents the color of the pixel. Generally, $x \in \R^d$ resides in the Euclidean space. Throughout the work, we consider classification tasks for which $y$ is a discrete label. 

A discriminative learning algorithm searches for parameters $w$ to best fit the relation of $(x_i,y_i)$ in the training data. A popular approach is to measure the goodness of fit using the negative log-likelihood loss: the parameters $w$ of a conditional probability model $p_w(y | x)$ are learned while minimizing $-\log p_w(y_i | x_i)$ over the training data. At test time, the class predicted for a test instance $x$ is the one with the highest probability, namely $\arg \max_y p_w(y | x)$. 

The learned probability model assigns a probability for each possible class. In this work, we interpret how the input $x$ relates to each of the possible classes. We suggest to measure the amount of functional information in the data instance $x$ that relates to each of the possible labels $y$. For each class, we relate the learner's preference to a different non-negative function: $f_y(x) = p_w(y | x).$ In the following, we use tools developed in functional analysis (cf.~\citealp{Bakry14}). First, we present the notion of the functional entropy. Then, we connect the functional entropy to the functional Fisher information through the log-Sobolev inequality.

\subsection{Functional Entropy}\label{sec:ent}
Functional entropies~(\citealp{Bakry14}) are defined over a continuous random variable: a function $f_y(z)$ over the Euclidean space $z \in \mathbb{R}^d$ with a Gaussian probability measure $\mu = \N(x,\Sigma)$ whose probability density functions is 
\[
    d \mu(z) = \frac{1}{\sqrt{(2\pi)^d |\Sigma|}}  e^{-{\frac{1}{2}} ((z-x)^\top\Sigma^{-1} (z-x))} dz.
\]
For notational clarity we denote the standard Gaussian measure centered around $x$ by $\nu = \N(x,I)$. It defers from $\mu = \N(x,\Sigma)$ by its covariance matrix. We aim at measuring the functional entropy of a label $y$ and a data instance $x$. Here and throughout, we use $z$ to refer to a stochastic variable, which we integrate over. The functional entropy of the non-negative label function $f_y(z) \ge 0$ is
\[
    \ent_\mu (f_y) &\triangleq& \int_{\R^d} f_y(z)\log \frac{f_y(z)}{\int_{\R^d}f_y(z) d\mu(z) } d \mu(z)
    \label{eq:ent}
\]
We hence define the functional entropy of a deep net with respect to a label $y$ by the function softmax output $f_y(z)$ when $z \sim \mu$ is sampled from a Gaussian distribution around $x$. The functional entropy is non-negative, namely $\ent_\mu(f_y) \ge 0$ and equals  zero only if $f_y(z)$ is a constant. This is in contrast to differential entropy of a continuous random variable with probability density function $q(z)$: $h(q) = -  \int_{\mathbb{R}^d} q(z) \log q(z) d z$, which is defined for $q(z) \ge 0$ with $\int_{\mathbb{R}^d} q(z) d z = 1$ and may be negative.

The functional entropy can be thought of as the Kullback–Leibler (KL) divergence between the prior distribution $p_{\nu}(z)$ and the posterior distribution $q_\nu(z)\triangleq p_{\nu}(z) f_y(z)$ of the decision function $f_y(z) = p(y|z)$ with respect to the data generating distribution over $z$. For clarity, let's assume that $\int_{\R} p_{\nu}(z)f_y(z) dz =  \E_{z\sim\nu}(f_y)=1$. Then, we have: 
\[
    \ent_{\nu}(f_y) &=& \int_{\R} p_{\nu}(z) f_y(z)\log f_y(z) dz 
    \\ &=& \int_{\R} p_{\nu}(z) f_y(z)\log \frac{p_{\nu}(z)f_y(z)}{p_{\nu}(z)} dz 
    \\ &=& \int_{\R} q_{\nu}(z)\log \frac{q_{\nu}(z)}{p_{\nu}(z)} dz 
    \\ &=& \text{KL}(q_\nu(z), p_\nu(z)).
\]
Intuitively, a high KL divergence signifies highly important features, since the learning process forces the the posterior to diverge from the prior for such features.

Unfortunately, the functional entropy is hard to estimate empirically, since it involves the normalized term $\frac{f_y(z)}{\int_{\R^n}f_y(z)~d\mu(z) }$. Since the integral can only be estimated by sampling, the log-scale denominator of its estimate is hard to compute in practice. Next, we use the log-Sobolev inequality to overcome this.
 
\subsection{Functional Fisher Information}\label{sec:lsi}

Instead of estimating the functional entropy directly, we use the log-Sobolev inequality for standard Gaussian measures (cf.~\citealp{Bakry14}, Section 5.1.1). This permits to bound the functional entropy with the functional Fisher information. We denote the functional Fisher information with
\[
    \I_{\nu}(f_y) \triangleq \int_{\R^d} \frac{ \norm{\nabla f_y(z)}^2 }{f_y(z)} d \nu(z)\label{eq:information}
\]
Hereby, $\norm{\nabla f_y(z)}$ is the $\ell_2$-norm of the gradient of $f_y$. The functional Fisher information is a natural extension of the Fisher information, which is defined for probability density functions. Specifically, the log-Sobolev inequality for any non-negative function $f_y(z) \ge 0$ is, 
\[
    \ent_{\nu}(f_y) \le  \frac{1}{2} \I_{\nu}(f_y). \label{eq:lsi}
\]
The above log-Sobolev inequality applies to standard Gaussian distribution $\nu = \N(x,I)$, centered around $x$. Furthermore, the bound is tight for the exponential family (see example in Appendix~\ref{appendix:tight}), which is a desirable property since the entropy is computed with respect to a label $y$ induced by a softmax output. The assumption that the elements the data are independent is limiting in practice, since the information in a data instance $x$ with respect to an interpreted label $y$ consider correlations between the different elements in $x \in \R^d$. In this work we rely on the functional Fisher information of correlated Gaussian distributions $\mu = \N(x,\Sigma)$ and show that taking into account the correlations in $x$ improves the model's explanation of its prediction of $y$.

\begin{figure*}[t!]
    \centering
    \includegraphics[width=\textwidth]{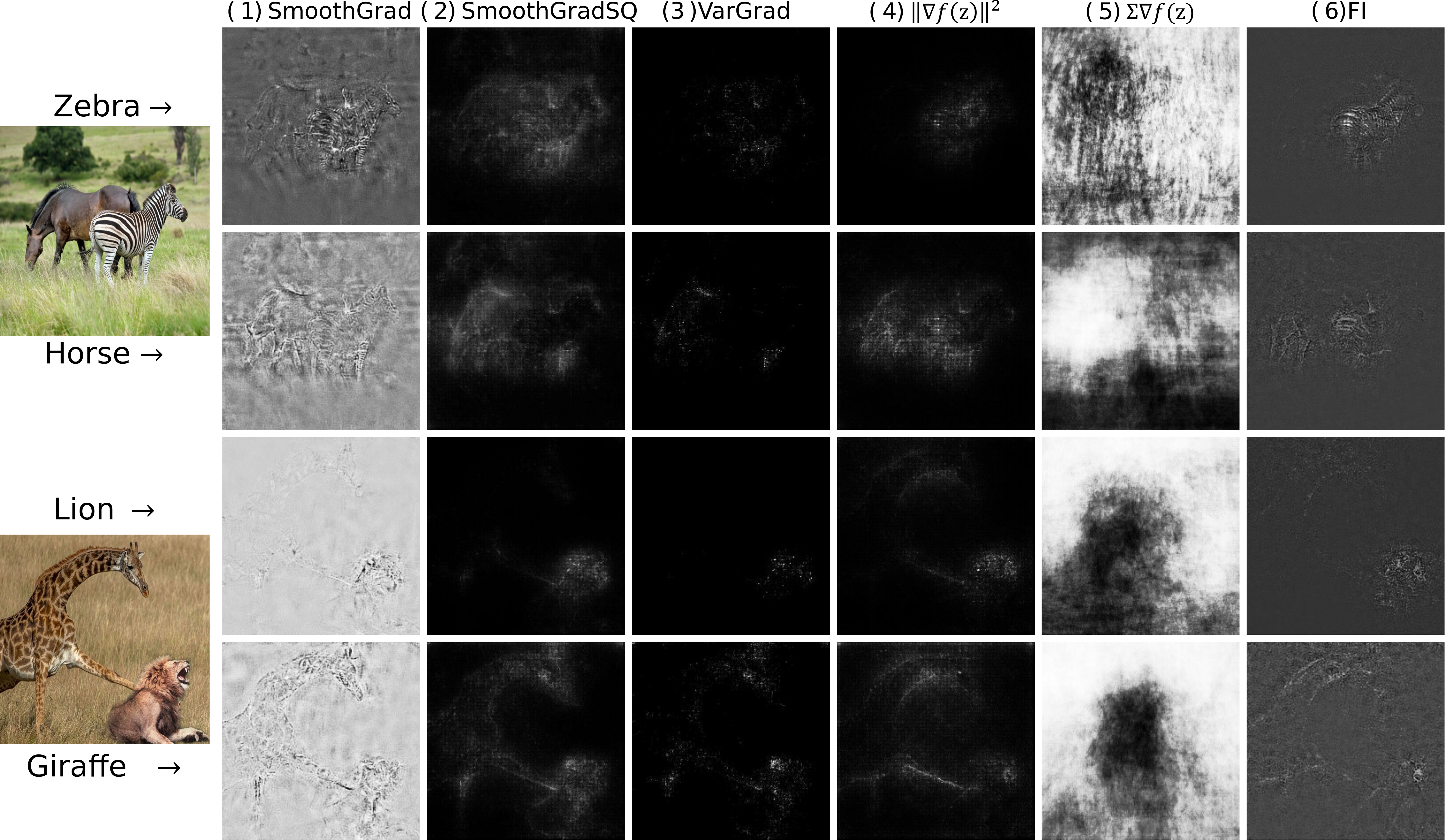}
    \caption{Different explanations of the two possible correct predictions of a fine-tuned ResNet50 network (each image has two animals and hence two correct predictions). We compare our approach (6) with various sampling-based explanation methods: (1) SmoothGrad, (2) SmoothGradSQ and (3) VarGrad. Columns (4) and (5) present the explanations of two components of our method. Particularly, (4) presents the functional information of each pixel when $z$ is sampled from a Gaussian centered around $x$ with the class covariance $\Sigma$, and (5) presents the left hand side of the inner product of Eq.~(\ref{eq:information_def}).}
    \label{fig:image}
\end{figure*}

\section{Feature Contribution via Functional Fisher Information}\label{sec:method}

In this section, we propose a sampling-based method that can quantify the contribution of an input feature $x_i$ to the decision function $f_y$. We start by describing our method, which samples perturbations of the input $x$  from $\nu$, and then expand it to overcome two key challenges. The first is the removal of the independence assumption encoded in $\nu$, and the second is the computation of the contribution of a subset of features when conditioning on the others. We develop two theorems to address those challenges. We first introduce Theorem \ref{thm:sobolev_dependent} which integrates the covariance matrix of $y$ into the functional Fisher information. Then, Corollary~\ref{thm:sobolev_set} yields a sampling protocol suitable for features subsets. This allows us to generate sampling-based explanations from the conditional and marginal distributions.

Recall that the functional entropy is bounded by the functional Fisher information, $\I_{\nu}$. $\I_{\nu}$ measures the expansion of the decision function through the $\ell_2$ norm of the gradient of that function. By considering Eq.~(\ref{eq:information}), we obtain:
\[\label{eq:contrib_total}
    I_{\nu}(f_y) = \sum_{i}{\mathbb{E}_{z\sim\nu}\left[\frac{(\nabla f_y(z)_i)^2}{f_y(z)}\right]}.
\]
Each component in Eq.~(\ref{eq:contrib_total}) can be viewed as the contribution of the feature $x_i$ to the total information.

\subsection{Covariance}

Real-world data features are correlated, and in fact the nature of the correlations changes across classes and modalities. For example, different features are expected to be correlated in images of lions and horses. Likewise, the correlations between words in positive movie reviews are expected to be different from those in negative reviews.  As a result, sampling the stochastic variable $z$ from a Gaussian distribution with a diagonal covariance matrix (i.e., encoding features independence), is incompatible with real-world data. To account for correlations within the data, we propose the following theoretical framework for explaining models applied to data with correlated features.

The functional Fisher information for dependent Gaussian measure is:
\[\label{eq:information_def}
    \I_{\mu}(f_y) \triangleq \int_{\R^d} \frac{ \langle \Sigma \nabla f_y(z), \nabla f_y(z) \rangle }{f_y(z)} d \mu(z).
\]
We next provide a theorem that extend Eq.~(\ref{eq:lsi}) to the correlational case:
\begin{theorem}\label{thm:sobolev_dependent}
    For every non-negative function $f_y: \R^d \rightarrow \R$ and a Gaussian measure $\mu$, 
    \[
            \ent_{\mu}(f_y) \le \frac{1}{2} \I_{\mu}(f_y).
    \]
\end{theorem}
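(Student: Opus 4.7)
The plan is to reduce the correlated case to the standard one by an affine change of variables that simultaneously whitens the covariance and translates the center to the origin. Write $\Sigma = A A^\top$ (for example, $A = \Sigma^{1/2}$), and define $g(\tilde z) \triangleq f_y(x + A \tilde z)$. If $\tilde z \sim \tilde \nu = \N(0, I)$, then $z = x + A \tilde z$ is distributed according to $\mu = \N(x, \Sigma)$. So pushing the measure through the change of variables gives, for any integrable $\phi$, $\int \phi(z)\, d\mu(z) = \int \phi(x + A\tilde z)\, d\tilde\nu(\tilde z)$.

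Applying this identity to $\phi = f_y$, $\phi = f_y \log f_y$, and $\phi = f_y \log(\E_\mu f_y)$ in turn shows that $\ent_\mu(f_y) = \ent_{\tilde\nu}(g)$: the normalization constant $\int f_y\, d\mu$ equals $\int g\, d\tilde\nu$, and the pointwise integrand of \equref{eq:ent} transforms identically under the change of variables. Next, by the chain rule, $\nabla g(\tilde z) = A^\top \nabla f_y(x + A\tilde z)$, so
\[
    \|\nabla g(\tilde z)\|^2 = \langle A^\top \nabla f_y,\, A^\top \nabla f_y\rangle = \langle A A^\top \nabla f_y,\, \nabla f_y\rangle = \langle \Sigma \nabla f_y,\, \nabla f_y\rangle,
\]
evaluated at $z = x + A\tilde z$. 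Dividing by $g(\tilde z) = f_y(z)$ and integrating against $\tilde\nu$, the same change-of-variables identity yields $\I_{\tilde\nu}(g) = \I_\mu(f_y)$, exactly matching \equref{eq:information_def}.

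Now invoke the standard Gaussian log-Sobolev inequality \equref{eq:lsi} for the non-negative function $g$ with respect to the standard Gaussian $\tilde\nu$ (the inequality is translation-invariant, so it applies with any center, in particular $0$). This gives $\ent_{\tilde\nu}(g) \le \tfrac{1}{2} \I_{\tilde\nu}(g)$. Substituting the two equalities from the previous paragraph concludes $\ent_\mu(f_y) \le \tfrac{1}{2} \I_\mu(f_y)$.

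The only mildly delicate step is the existence of the square root $A$: if $\Sigma$ is positive definite this is routine, while if $\Sigma$ is merely positive semidefinite the Gaussian $\mu$ is supported on an affine subspace and one must either restrict to that subspace before applying the change of variables or approximate by $\Sigma + \varepsilon I$ and pass to the limit using monotone/dominated convergence. Aside from this technicality, the proof is essentially an equivariance argument: the correlated log-Sobolev inequality is nothing but the standard one transported through the linear isometry that maps $\tilde\nu$ to $\mu$.
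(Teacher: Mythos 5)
Your proof is correct and follows essentially the same route as the paper's: whiten the Gaussian by an affine change of variables, verify that both the functional entropy and the functional Fisher information are preserved (with the chain rule turning $\lVert\nabla g\rVert^2$ into $\langle \Sigma\nabla f_y, \nabla f_y\rangle$), and then invoke the standard Gaussian log-Sobolev inequality. If anything, your substitution $z = x + A\tilde z$ is the correct form of the map the paper writes as $z \leftarrow \sqrt{\Sigma}(x+z)$, and your closing remark on handling a merely positive-semidefinite $\Sigma$ is a detail the paper's proof omits.
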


\textit{Proof sketch.} Relying on the log-Sobolev inequality, we perform the following variable change: $z \leftarrow \sqrt{\Sigma}(x+z)$. This permits us to adjust the function's input to a standard Gaussian random variable. By doing so we obtain,
\[
    \ent_{\mu}(f_y(z)) = \ent_{\nu}(g_y(\sqrt{\Sigma}(x+z))).
\]
Then, to conclude the proof, we use integration by substitution of $g_y(z)$ and $f_y(z)$. The full proof is provided in Appendix~\ref{appendix:thm1_proof}.

Theorem~\ref{thm:sobolev_dependent} addresses the independence assumption by considering the covariance of the data. It proposes that when sampling an interpretability method, one should use a dependent Gaussian distribution around a data point and a covariance matrix computed over the data. 

Intuitively, Theorem~\ref{thm:sobolev_dependent} suggests that the contribution of the feature $x_i$ to the total functional information $\I_{\mu}(f_y)$ is:
\[\label{eq:method}
    \int_{\R^d} \Big(\sum_j{\Sigma_{ij}\nabla f_y(z)_j}\Big)\nabla f_y(z)_i d\mu(z).
\]
Note that the left hand side of the multiplication is the sum of the gradients weighted by the class covariances of the feature $x_i$. This weighted sum explicitly accounts for potential correlations between the features and is naturally derived from Theorem~\ref{thm:sobolev_dependent}. To compute this integral we use a standard Monte Carlo sampling procedure. 

\figref{fig:image} demonstrates the importance of each of the components of the integral in Eq~(\ref{eq:method}). Column (4) omits the multiplication by the covariance matrix, while column (5) does not multiply the weighted sum by the gradient. Indeed, it can be seen that the full method explanation in column (6) differs from the explanations in columns (4) and (5).

\subsection{Subset Information}\label{sub:subset}
Multiple scenarios do not necessarily require explaining the entire input but rather only a subset of it. For example, consider the medical imaging task of detecting a tumor in a patient body scan. If the region of interest is only the stomach area, there might be no reason to explain other parts of the body and perturbate the corresponding features. There are other reasons not to perturb a subset of the features, such as when one wants to explain a prediction given fixed values or avoid heavy computations (the covariance matrix has quadratic space complexity).

In this subsection, we present a corollary of Theorem~\ref{thm:sobolev_dependent} to measure the contribution of features to a decision function when conditioning on a fixed value of another subset of the features. As a result, this allows us to measure more appropriately and efficiently the contribution of a desired subset of features or justify conditional sampling when dependencies exist in the data. 

In order to do so, we first partition $x\in \R^d$ into $(x_1, x_2)$ where $x_1 \in \R^{d_1}$ and $x_2\in \R^{d_2}$ ($d_1+d_2=d$). Without loss of generality, let $x_1$ be the set of features for which we are interested in computing the contribution scores. Then, the expectation $x$ and the covariance matrix $\Sigma$ are:
\[
    x = \begin{bmatrix}
        x_{1}\\
        x_{2}\\
    \end{bmatrix},
    \quad
    \Sigma = \begin{bmatrix}
        \Sigma_{11} & \Sigma_{12} \\
        \Sigma_{21} & \Sigma_{22} \\
    \end{bmatrix}.
\]
A known property of the multivariate Gaussian distribution is that any distribution of a subset of variables condition on known values of another subset of variables is also a Gaussian. We denote the conditional distribution of $z_1$ on $z_2$ and the marginal distribution of $z_2$ by $\mu_1, \mu_2$ respectively,
\[
    \mu_1\hspace{-0.3cm}&=&\hspace{-0.3cm}{\cal N} \left(x_1 + \Sigma_{12}\Sigma_{22}^{-1}(z_2-x_2), \Sigma_{11}-\Sigma_{12} \Sigma_{22}^{-1}\Sigma_{21}\right)\nonumber,\\
    \mu_2\hspace{-0.3cm}&=&\hspace{-0.3cm}{\cal N} (x_2, \Sigma_{22}). 
\]
\begin{corollary}\label{thm:sobolev_set}
    For a partitioned input $x=(x_1, x_2)$, a Gaussian measure $\mu$, a conditional distribution $\mu_1$, and a marginal distribution $\mu_2$. For every non-negative function $f_y: \R^d \rightarrow \R$,
    \begin{equation}
        \ent_{\mu}(f_y) \le \frac{1}{2}\E_{z_2\sim \mu_2}\left[ \I_{\mu_1}(f_y\big|z_2) \right].
    \end{equation}
    And, 
    \[
        \ent_{\mu_1}(f_y \big| x_2) \le \frac{1}{2} \I_{\mu_1}(f_y \big| x_2).
    \]
    
\end{corollary}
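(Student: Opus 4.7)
The plan is to reduce both inequalities to Theorem~\ref{thm:sobolev_dependent} by exploiting the Gaussian tower structure: the conditional distribution $\mu_1(\cdot \mid z_2)$ is itself a multivariate Gaussian on $\R^{d_1}$ for every fixed $z_2$, with the mean and covariance displayed immediately before the corollary, and the joint measure factors as $d\mu(z_1,z_2) = d\mu_1(z_1\mid z_2)\, d\mu_2(z_2)$.

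I would start with the second inequality, which is essentially immediate. Setting $z_2 = x_2$, the measure $\mu_1(\cdot \mid x_2)$ is a Gaussian on $\R^{d_1}$ with mean $x_1$ and covariance $\Sigma_{11} - \Sigma_{12}\Sigma_{22}^{-1}\Sigma_{21}$, and $f_y(\cdot, x_2)$ is a non-negative function on $\R^{d_1}$. Applying Theorem~\ref{thm:sobolev_dependent} to this Gaussian and this function yields $\ent_{\mu_1}(f_y \mid x_2) \le \tfrac{1}{2}\,\I_{\mu_1}(f_y \mid x_2)$ directly, with no further ingredients.

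For the first inequality, the plan is to apply the previous step pointwise in $z_2$ and then integrate against the marginal. Concretely, Theorem~\ref{thm:sobolev_dependent} applied to the Gaussian $\mu_1(\cdot \mid z_2)$ and the non-negative function $f_y(\cdot, z_2)$ gives $\ent_{\mu_1}(f_y\mid z_2) \le \tfrac{1}{2}\,\I_{\mu_1}(f_y\mid z_2)$ for every fixed $z_2$, and taking expectation over $z_2\sim\mu_2$ produces exactly the right-hand side of the claim.

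The step I expect to be most delicate is relating $\ent_\mu(f_y)$ to $\E_{z_2\sim\mu_2}[\ent_{\mu_1}(f_y\mid z_2)]$ on the left-hand side. The standard tensorization identity for the functional entropy reads
\[
    \ent_\mu(f_y) &=& \E_{z_2\sim\mu_2}\bigl[\ent_{\mu_1}(f_y\mid z_2)\bigr] + \ent_{\mu_2}(g),
\]
with $g(z_2) = \E_{z_1\sim\mu_1(\cdot\mid z_2)}[f_y(z_1,z_2)]$, and the trailing term $\ent_{\mu_2}(g)$ is nonnegative, so a bald pointwise integration does not by itself produce the claimed bound. I would close the gap in one of two ways: either by a second application of Theorem~\ref{thm:sobolev_dependent} to the pair $(g,\mu_2)$ to dominate $\ent_{\mu_2}(g)$ by $\tfrac{1}{2}\I_{\mu_2}(g)$ and folding the two Fisher-information bounds together, or, in line with the intended use where the complement block is held fixed at $z_2 = x_2$ and only $x_1$ is perturbed, by reading $\ent_\mu(f_y)$ as the conditional entropy given $z_2$, which reduces the claim to the integrated pointwise inequality already derived from Theorem~\ref{thm:sobolev_dependent}.
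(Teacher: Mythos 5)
Your treatment of the second inequality coincides with the paper's: $\mu_1(\cdot\mid x_2)$ is a Gaussian on $\R^{d_1}$, so Theorem~\ref{thm:sobolev_dependent} applies verbatim to the slice $f_y(\cdot,x_2)$. For the first inequality you take a genuinely different route. The paper never decomposes the entropy at all: it starts from the conclusion of Theorem~\ref{thm:sobolev_dependent}, $\ent_\mu(f_y)\le\frac{1}{2}\int_{\R^d}\langle\Sigma\nabla f_y,\nabla f_y\rangle/f_y\,d\mu$, and applies Fubini to the right-hand side using the factorization $d\mu(z)=d\mu_1(z_1\mid z_2)\,d\mu_2(z_2)$, reading the resulting inner integral as $\I_{\mu_1}(f_y\mid z_2)$. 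Under that reading the ``conditional Fisher information'' still contains the full gradient $\nabla f_y$ (both blocks) and the full $\Sigma$, so the first inequality is an immediate rewriting of Theorem~\ref{thm:sobolev_dependent}, and the tensorization residual you worry about never enters. What the Fubini route buys is a one-line proof; what your route buys is a clear diagnosis of why a slice-wise log-Sobolev argument cannot deliver this bound.

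That diagnosis is correct, but neither of your proposed repairs closes the gap in the form the statement needs. If $\I_{\mu_1}(f_y\mid z_2)$ is interpreted as the Fisher information of the $z_1$-slice alone --- gradients only in $z_1$, with the conditional covariance $\Sigma_{11}-\Sigma_{12}\Sigma_{22}^{-1}\Sigma_{21}$ --- then the first inequality is false in general: take $f_y$ depending only on $z_2$, so that $\I_{\mu_1}(f_y\mid z_2)=0$ for every $z_2$ while $\ent_\mu(f_y)=\ent_{\mu_2}(f_y)>0$ unless $f_y$ is constant. Your repair (a) necessarily reintroduces $z_2$-gradient terms through $\I_{\mu_2}(g)$ and so reproves Theorem~\ref{thm:sobolev_dependent} rather than the stated bound, and repair (b) changes the statement to the conditional one (which is your second inequality). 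To match the claim as written you should follow the paper and decompose the Fisher-information integral via Fubini, accepting that $\I_{\mu_1}(f_y\mid z_2)$ then denotes the inner integral of the full quadratic form rather than a self-contained log-Sobolev functional on $\R^{d_1}$.
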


Our proof relies on Fubini's theorem and on the fact that $\mu_1$ is a Gaussian. The full proof is provided in Appendix~\ref{appendix:thm2_proof}.

Corollary~\ref{thm:sobolev_set} justifies conditional sampling and allows us to measure the contribution of a subset of features more appropriately and efficiently when conditioning on another subset.

\section{Experiments}\label{sec:experiments}

Different data modalities like vision, audio, and language have different characteristics. For example, while linguistics and audio signals are sequential, in images the spatial dimension is prominent. As another example, audio and visual signals are continuous, while text is discrete in nature. The different characteristics of these modalities have led to the development of modal-specific models and data processing pipelines. As a result, an explanation method suitable for models of one modality like vision, might fail when applied to others. This section provides a quantitative and qualitative evaluation of our framework and conducts experiments on the audio, visual, and textual modalities. 

We first describe the tasks, datasets, and the explained modeling architectures we use. Then, we compare our interpretability method to previous sampling-based methods: SmoothGrad, SmoothGradSQ, and VarGrad (see Sec~\ref{sec:related_work}). Consequently, we qualitatively evaluate our framework. The code for our method is in the supplemental material.

\begin{table}[t]
	\centering
	\caption{Quantitative comparison of our proposed method with previous sampling-based explanation approaches. We report AUC values as per negative perturbations evaluation (see text).}
	\begin{scriptsize}
		\begin{tabular}{llccc}
			\toprule
			& &  \multicolumn{2}{c}{Accuracy} &Consistency \\
			\cmidrule(lr){3-4} \cmidrule(lr){5-5}
			Modality & Method & GT & Predicted & Predicted \\
			\midrule
			\multirow{3}{*}{\shortstack[l]{Audio}} &SmoothGrad&37.55&37.38&47.22\\
			&SmoothGradSQ&34.11&33.65&39.40\\
			&VarGrad&39.95&40.65&49.19\\
			\cmidrule[0.15pt]{2-5}
			&Ours&\textbf{51.70}&\textbf{51.09}&\textbf{76.28}\\
			\cmidrule[0.15pt]{1-5}
			\multirow{3}{*}{\shortstack[l]{Image}}&SmoothGrad&46.85&46.64&53.83\\
			&SmoothGradSQ&54.35&51.24&58.35\\
			&VarGrad&47.47&47.14&64.97\\
			\cmidrule[0.15pt]{2-5}
			&Ours&\textbf{57.36}&\textbf{54.48}&\textbf{65.06}\\
			\cmidrule[0.15pt]{1-5}
			\multirow{3}{*}{\shortstack[l]{Text}}&SmoothGrad&62.45&64.24&73.89\\
			&SmoothGradSQ&64.19&64.66&76.26\\
			&VarGrad&\textbf{66.30}&63.45&71.88\\
			\cmidrule[0.15pt]{2-5}
			&Ours&65.72&\textbf{66.30}&\textbf{79.36}\\
			\bottomrule
		\end{tabular}  
	\end{scriptsize}
  \label{table:auc}
\end{table}

\subsection{Experimental Setup}\label{sec:exp_set}

\textbf{Audio.} We use the Google speech commands dataset~\cite{speechcommands}. The dataset consists of 105,829 utterances of 35 words recorded by 2,618 speakers. This data was split into train (80\%), validation (10\%) and test (10\%) sets. We use the M5 model proposed by~\citet{m5} - a CNN architecture followed by an MLP classification layer. This model achieves 73\% accuracy on the validation set. 

\textbf{Vision.}  We use the CIFAR10 to evaluate our method quantitatively~\cite{cifar10}. The dataset is constructed of 50,000 images in the train set and 10,000 images in the validation set. Our model consists of two blocks of a convolution layer followed by a max-pooling layer, and this two blocks are followed by three fully connected layers, with ReLU activation~\cite{rlu}. This model scores 68\% in accuracy on the validation set. For the qualitative results, we use a pre-trained version of ResNet50~\citep{resnet} and fine-tune it on an ImageNet-like animal dataset.~\footnote{\small \url{https://www.kaggle.com/antoreepjana/animals-detection-images-dataset}}

\textbf{Text.} We evaluate our method on the IMDB dataset~\cite{imdb}, with the task of determining the binary sentiment of reviews. The train and test sets consist of 25,000 reviews, and are balanced for sentiment classes. Our model is a BiLSTM~\citep{hochreiter1997long} which feeds a fully connected classification layer, achieving 78\% accuracy on the test set. In addition, in the qualitative evaluation, we present and compare explanations of another three architectures: LSTM, CNN~\citep{zhang2015sensitivity}, and a transformer~\citep{sanh2019distilbert}. Since texts are discrete, we calculate their gradients with respect their embedding.


\begin{figure}[t!]
    \centering
    \includegraphics[width=.48\textwidth]{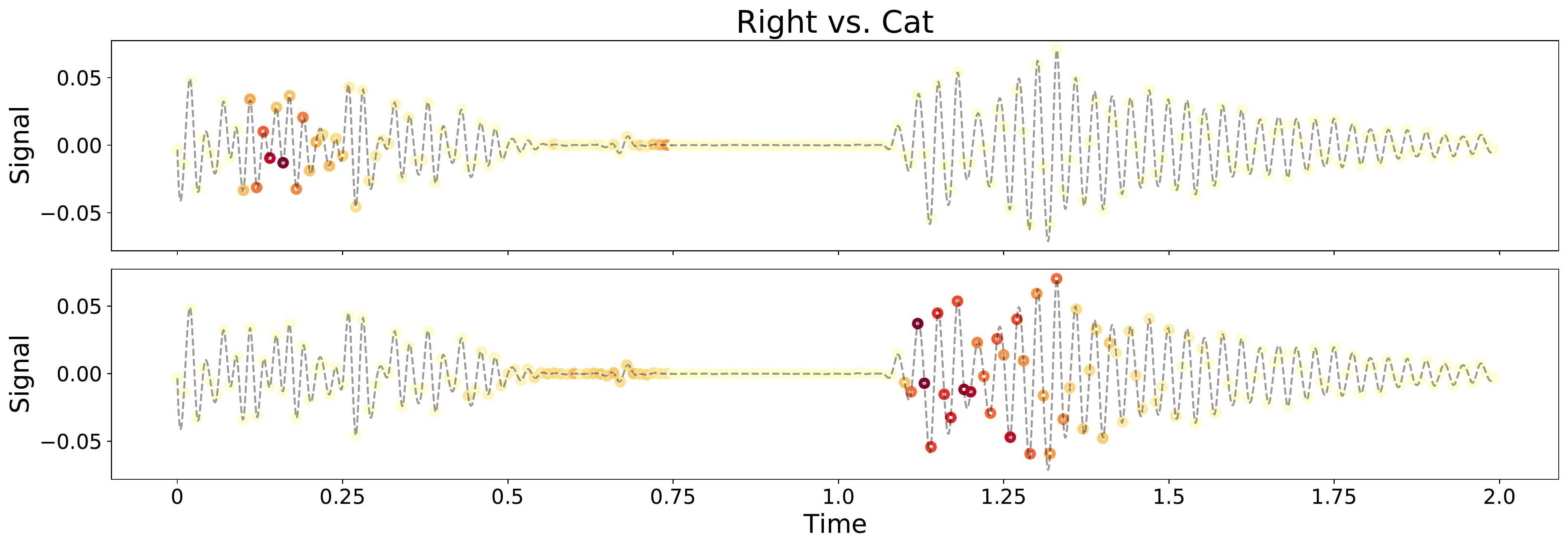}
    \caption{Explanations of the `Right' (top) and `Cat' (bottom) predictions for the utterance of the word `Right' followed by the word `Cat'. Features with high explainability scores are highlighted in color, and warmer colors correspond to higher contributions.}
    \label{fig:audio}
\end{figure}

{
\renewcommand{\arraystretch}{1.5}
\begin{table*}[t!]
    \caption{Explanations generated by our method for predictions of a BiLSTM model trained to predict the sentiment of a movie review. The top two examples are erroneous predictions, while the bottom two are correct. Our method highlights the words that can best explain the model predictions.}
    \begin{center}
    \fboxsep=0pt\relax
    \begin{small}
    \begin{tabular}{ll m{35em}}
        \toprule
        Label & Prediction & Text \\
        \midrule
        Positive & Negative & \bcolorbox{red!15}{ If}\bcolorbox{red!13}{ there}\bcolorbox{red!10}{ is}\bcolorbox{red!10}{ a}\bcolorbox{red!8}{ movie}\bcolorbox{red!10}{ to}\bcolorbox{red!11}{ be}\bcolorbox{red!26}{ called}\bcolorbox{red!33}{ perfect}\bcolorbox{red!2}{ then}\bcolorbox{red!2}{ this}\bcolorbox{red!2}{ is}\bcolorbox{red!3}{ it}\bcolorbox{red!1}{ .}\bcolorbox{red!2}{ So}\bcolorbox{red!68}{ bad}\bcolorbox{red!17}{ it}\bcolorbox{red!26}{ wasn}\bcolorbox{red!18}{ '}\bcolorbox{red!32}{ t}\bcolorbox{red!38}{ intended}\bcolorbox{red!38}{ to}\bcolorbox{red!40}{ be}\bcolorbox{red!43}{ .}\bcolorbox{red!41}{ Do}\bcolorbox{red!75}{ not}\bcolorbox{red!0}{ miss}\bcolorbox{red!0}{ this}\bcolorbox{red!0}{ one}\bcolorbox{red!1}{ !}\\

        Negative & Positive & \bcolorbox{red!2}{ I}\bcolorbox{red!3}{ was}\bcolorbox{red!2}{ very}\bcolorbox{red!2}{ excited}\bcolorbox{red!2}{ about}\bcolorbox{red!12}{ rent}\bcolorbox{red!7}{\#\#ing}\bcolorbox{red!10}{ this}\bcolorbox{red!7}{ movie}\bcolorbox{red!9}{ but}\bcolorbox{red!7}{ really}\bcolorbox{red!8}{ got}\bcolorbox{red!15}{ disappointed}\bcolorbox{red!9}{ when}\bcolorbox{red!7}{ I}\bcolorbox{red!5}{ saw}\bcolorbox{red!3}{ it}\bcolorbox{red!3}{ .}\bcolorbox{red!2}{ The}\bcolorbox{red!3}{ only}\bcolorbox{red!3}{ good}\bcolorbox{red!1}{ thing}\bcolorbox{red!2}{ about}\bcolorbox{red!14}{ it}\bcolorbox{red!16}{ are}\bcolorbox{red!27}{ the}\bcolorbox{red!75}{ great}\bcolorbox{red!27}{ visual}\bcolorbox{red!8}{ effects}\bcolorbox{red!0}{ .}\\

      Positive & Positive & \bcolorbox{red!17}{ Aside}\bcolorbox{red!17}{ from}\bcolorbox{red!17}{ the}\bcolorbox{red!17}{ "}\bcolorbox{red!17}{ Thor}\bcolorbox{red!17}{ "}\bcolorbox{red!16}{ strand}\bcolorbox{red!18}{ of}\bcolorbox{red!16}{ Marvel}\bcolorbox{red!16}{ features}\bcolorbox{red!16}{ ,}\bcolorbox{red!16}{ the}\bcolorbox{red!16}{ "}\bcolorbox{red!16}{ Spider}\bcolorbox{red!15}{\#\#man}\bcolorbox{red!15}{ "}\bcolorbox{red!21}{ stories}\bcolorbox{red!14}{ were}\bcolorbox{red!22}{ always}\bcolorbox{red!18}{ my}\bcolorbox{red!61}{ favourite}\bcolorbox{red!38}{\#\#s}\bcolorbox{red!37}{ .}\bcolorbox{red!35}{ This}\bcolorbox{red!22}{ latest}\bcolorbox{red!29}{ movie}\bcolorbox{red!41}{ is}\bcolorbox{red!32}{ certainly}\bcolorbox{red!33}{ the}\bcolorbox{red!75}{ best}\bcolorbox{red!0}{ one}\bcolorbox{red!1}{ .}\\

    Negative & Negative & \bcolorbox{red!1}{ For}\bcolorbox{red!1}{ me}\bcolorbox{red!1}{ when}\bcolorbox{red!0}{ a}\bcolorbox{red!4}{ plot}\bcolorbox{red!1}{ is}\bcolorbox{red!1}{ based}\bcolorbox{red!0}{ upon}\bcolorbox{red!0}{ a}\bcolorbox{red!0}{ fault}\bcolorbox{red!0}{\#\#y}\bcolorbox{red!0}{ or}\bcolorbox{red!7}{ simply}\bcolorbox{red!75}{ bad}\bcolorbox{red!57}{ premise}\bcolorbox{red!49}{ ,}\bcolorbox{red!40}{ everything}\bcolorbox{red!40}{ that}\bcolorbox{red!39}{ follows}\bcolorbox{red!32}{ is}\bcolorbox{red!14}{ equally}\bcolorbox{red!15}{ fault}\bcolorbox{red!23}{\#\#y}\bcolorbox{red!12}{ and}\bcolorbox{red!21}{ meaning}\bcolorbox{red!13}{\#\#less}\bcolorbox{red!16}{ .}\bcolorbox{red!8}{ It}\bcolorbox{red!14}{ is}\bcolorbox{red!22}{ just}\bcolorbox{red!44}{ empty}\bcolorbox{red!20}{ .}\\
        \bottomrule
    \end{tabular}
    \end{small}
    \end{center}
    \label{tab:qual_text}
\end{table*}
}

\subsection{Quantitative Evaluation}\label{sec:quan} 

The evaluation of interpretation methods is challenging due to the lack of a gold standard. As in previous work, we consider negative perturbations evaluation. Within this framework, we apply two evaluation measures, one of them is novelty of this work. We next describe these measures. 

\paragraph{Negative perturbations.} This procedure is composed of two stages. In the first stage, we use the trained deep net which we would like to explain, and generate an importance score for each feature in the context of each test set example. In the second stage, for each test set example, we mask increasingly large portions of the example (from 10\% up to 90\%, in 10\% steps), in an increasing order of feature importance (from lowest to highest).\footnote{In the audio and image modalities, features correspond directly to the raw input. In the textual modality the features are the word embedding coordinates and the score of each word is the sum of the scores of its coordinates.} At each step, we measure the post-hoc accuracy and post-hoc consistency of each model interpretation method (see below). We eventually plot for each measure its measure value against the masking percentage and report the area under the curve (AUC). Since we mask features from the least to the most important, higher AUC values correspond to better explanation quality. 

For the masking, in the visual setup we replace pixels with black pixels, in the textual setup we replace words with the `unknown' token, and in the audio setup we use zero values.

\paragraph{Post-hoc accuracy.} Following~\citet{gur2021visualization}, in this evaluation setup, we measure the model's accuracy under negative perturbations. We present results when explaining the ground truth label (GT) and the label predicted by the model. The accuracy is measured on the entire test set at each step. Table~\ref{table:auc} (accuracy section) presents the AUC of the accuracy. It demonstrates that our method outperforms previous sampling-based methods on two of the three modalities when explaining the GT and on all modalities when explaining the predictions of the model. Additionally, the differences between our method to the previous methods are higher for the audio and image modalities. This suggests that the covariance has a higher effect on continuous data than discrete data like texts.

\paragraph{Post-hoc consistency.} Since post-hoc accuracy is calculated with respect to the ground truth of each example, it might not highlight the features that drive the predictions of the model, but rather those features that are highly correlated the gold label. For example, consider a model which performs poorly on the test set. A good explanation method should highlight the features which contribute most to the wrong decisions of the model. Post-hoc accuracy measures how masking out features affects the performance of the model, which might not align with our goal. What we would like to understand is how consistent the model is in its predictions before and after the features are masked.

Therefore, we propose to also measure post-hoc consistency. Post-hoc consistency measures the agreement between the predictions of the model when given the masked input with its predictions when given the original unmasked input. To the best of our knowledge, this measure is a novel contribution of this work and we hope it will be adopted by the research community (see similar considerations in \citet{vqarad}). 

Table~\ref{table:auc} (consistency column) presents the AUC values of the post-hoc consistency. Our method, outperforms the previous sampling-based methods in all three modalities. Additionally, the differences between our method and the other methods are bigger than the differences in the post-hoc accuracy for the audio and text modalities. Finally, as discussed in Sec~\ref{sec:method}, without considering the covariance matrix, our method can be viewed as a normalized version of SmoothGradSQ. Since the fluctuations of $f_y(z)$ are small, the differences between the two methods in the post-hoc metrics are minor. To validate this, we evaluated the average Spearman's correlation between the importance scores assigned by SmoothGradSQ, and our method (without the covariance matrix). In the image modality, the correlation is 0.95 (for comparison, it equals 0.13 when the covariance is considered). Hence, the order of the features w.r.t. their scores is almost identical, and the post-hoc values are roughly the same. This observation highlights the importance of the data's covariance.

\subsection{Qualitative Evaluation}\label{sec:qual}

A good explanation method should provide information about the features that are most associated with each of the output classes according to the model. We next examine whether our method provide such information. Following \citet{smoothgrad, adebayo2018sanity, gur2021visualization}, for this analysis we consider examples that have more than one gold label (e.g. consider the Horse and Zebra image in \figref{fig:image}). 

For audio, we concatenate two utterances yielding an utterance with two commands: `Right' and `Cat' (\figref{fig:audio}). We next run our method in order to explain the `Right' and `Cat' predictions of the model (the model can predict additional classes but for the sake of this analysis we are interested in these ``correct'' classes). The highlighted features indicate that our method concentrates on the correct part of the signal, i.e., for `Right' it focuses on the left part of the signal and for `Cat' it focuses on the right part.

For vision, we demonstrate our method's ability to explain the two correct predictions for images that contain two objects. For this end, we use a pre-trained version of ResNet50~\citep{resnet} and fine-tune it on a dataset consisting of 80 animal classes.\footnote{\url{https://www.kaggle.com/antoreepjana/animals-detection-images-dataset}}
We then consider the two images in~\figref{fig:image} and explain the two correct predictions of each image i.e., Horse or Zebra and Lion or Girrafe. As can be seen in~\figref{fig:image}, our method focuses on the object corresponding to the explained prediction. In addition, it reveals strong unique characteristic features of the predicted animals: the lion's mouth, the giraffe's neck and the zebra's stripes. In contrast to the previous sampling-based methods, it seems that our method is capable of distinguishing between the two animals and focus more on the the unique characteristics of each.

For text, Table~\ref{tab:qual_text} presents explanations for the two possible classes of four IMDB reviews. For this analysis we selected for each gold label one review with a correct model prediction and one review with an erroneous prediction.  The stronger the red color is, the higher the score determined by our method. It can be observed that the highlighted words provide intuitive explanations to the predictions of the model.

\begin{figure}[t!]
        \centering
        \begin{subfigure}[b]{.23\textwidth}
         \centering
         \includegraphics[width=1\textwidth]{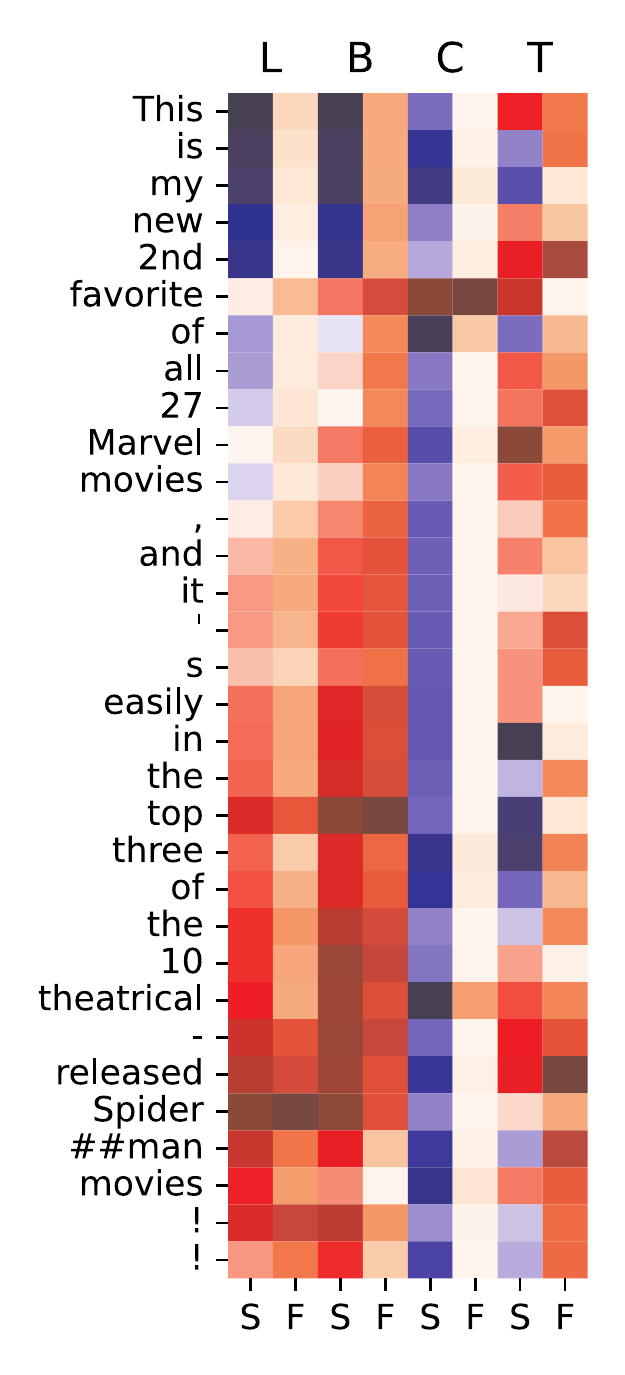}
     \end{subfigure}
     \begin{subfigure}[b]{.23\textwidth}
         \centering
         \includegraphics[width=1\textwidth]{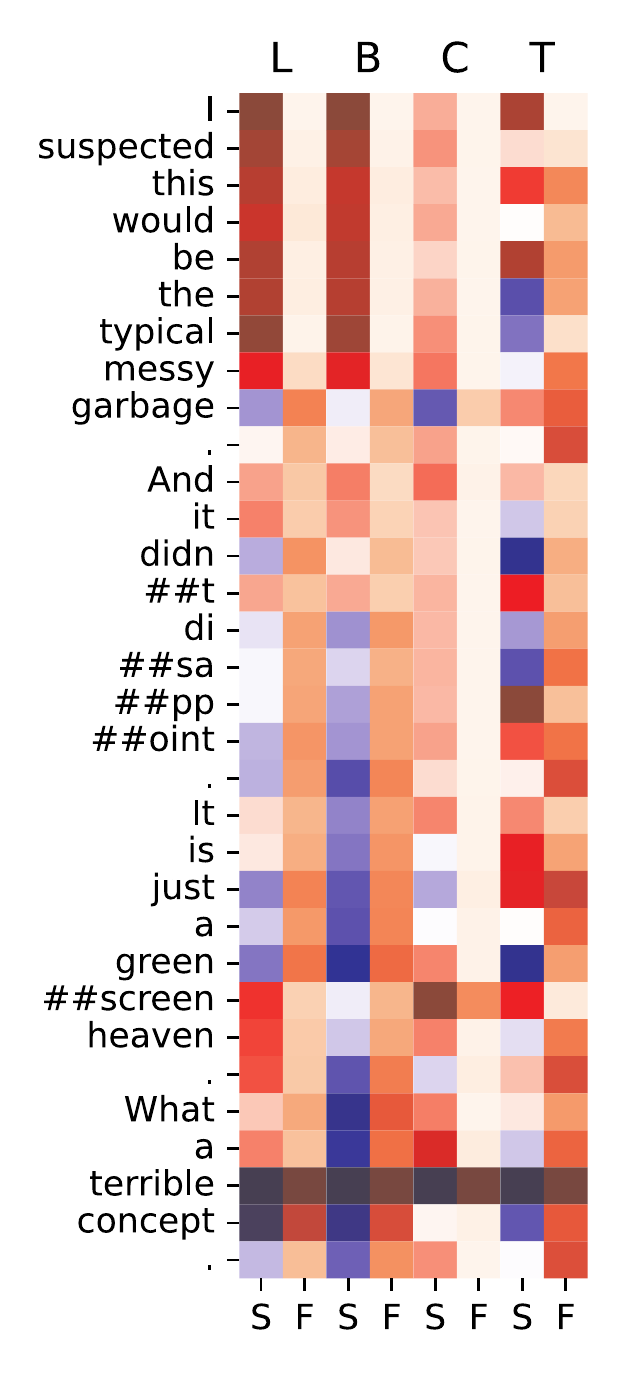}
     \end{subfigure}
    \caption{Explanations generated by our method (F) and SmoothGrad (S) for four architectures: LSTM (L), BiLSTM (B), CNN (C), and transformer (T), for two textual reviews. Red scores represent high scores.}
    \label{fig:heatmaps}
\end{figure}

\paragraph{Explaining text processing architectures.} Unlike for the audio and the image modalities where we use a CNN-based model, for text we use a BiLSTM architecture. However, BiLSTM is not the only architecture that has demonstrated to be effective for text classification, CNNs~\citep{zhang2015sensitivity} and transformers~\citep{bert} are also very prominent. In our last qualitative analysis, we compare text explanations for the following architectures: LSTM (L), BiLSTM (B), CNN (C), and transformer (T). This may shed light on the value of our method for the different architectures. For a fair comparison, we use the vocabulary, tokenizer, and embedding layer of the pre-trained {\sc{distilbert}} model~\cite{sanh2019distilbert} for all architectures. We compare our method (F) to the sign-preserving SmoothGrad method (S).

The heatmaps for two selected reviews are presented in~\figref{fig:heatmaps}. The explanations reveal different patterns of the token contributions to the decisions of each architecture. The sequential architectures, LSTM and BiLSTM, present monotonic patterns of word importance, according to both methods. Additionally, each time the LSTM encounters a negation, contrast, objection or a changing sentiment word, the sign of the SmoothGrad score is changed. A similar pattern is demonstrated for the BiLSTM, however, since it processes the input in both directions (left to right and right to left) it seems to be more robust to these changes.

In contrast, the CNN model is only focused on a small number of relevant words. An intuitive explanation for this behavior is that CNN consists of a funnel of filters. Additionally, our method provides more interpretable explanations than SmoothGrad, highlighting a small number of features with much higher scores than the others. 

Lastly, it seems that the two sampling-based methods fail to explain the transformer architecture, generating very similar scores to all participating features (differences are only in the fifth significance digit). We hypothesise that this pattern stems from the tight connections between the features, as enforced by the transformer attention mechanism.

\section{Conclusion}

Model interpretability plays an essential role in deep nets research. Explanations provide insights about the performance of the model beyond its test-set accuracy and, moreover, they make deep nets accessible to non-experts that can now understand their predictions. Previous sampling-based methods for model interpretability have shown strong empirical results but lack theoretical foundations. In this work we aimed to close this gap and proposed a functional informational viewpoint on sampling-based interpretability methods. We employed the log-Sobolev inequality, which allowed us to compute the functional Fisher information of a set of features. This provides a quantitative feature importance score which takes into account the covariance of the data. We have shown the efficacy of our method both quantitatively and qualitatively. 

In future work we would like to explore the importance of the feature covariance matrix more deeply. For example, while we demonstrated the value of our method for three data modalities, image, text and audio, we would like to better understand in which cases the feature covariance should play a significant rule in explainability. Moreover, since the covariance matrix may impose heavy memory requirements we will aim to find effective and efficient approximations.

\section{Acknowledgements}

Tamir and Itai were funded by Grant No. 2029378 from the United States-Israel Binational Science Foundation (BSF). Roi Reichart and Nitay Calderon were funded by a CornellTech-Techion-AOL grant on learning from multiple modalities - text, vision and video, and by a MOST grant on Natural Interface for Cognitive Robots.

\bibliography{bib}
\bibliographystyle{icml2022}


\newpage
\appendix
\onecolumn

In this appendix, we provide complete proofs for the theorems in the main paper. Code for our method is attached.

\section{Proofs}\label{sec:proofs}
Functional entropies are defined over a continuous random variable, i.e., a function $f_y(z)$ over the Euclidean space $z \in \mathbb{R}^d$ with a Gaussian probability measure $\mu = \N(x,\Sigma)$ whose probability density functions is 
\[
    d \mu(z) = \frac{1}{\sqrt{(2\pi)^d |\Sigma|}}  e^{-{\frac{1}{2}} ((z-x)^\top\Sigma^{-1} (z-x))} dz.
\]
For notational clarity we denote the standard Gaussian measure $\nu = \N(x,I)$, centered around $x$. It defers from $\mu = \N(x,\Sigma)$ by its covariance matrix. We aim at measuring the functional entropy of a label $y$ of a data instance $x$. Here and throughout, we use $z$ to refer to a stochastic variable, which we integrate over. The functional entropy of the non-negative label function $f_y(z) \ge 0$ is
\begin{equation}
    \ent_\mu (f_y) \triangleq \int_{\R^d} f_y(z)\log \frac{f_y(z)}{\int_{\R^d}f_y(z) d\mu(z) } d \mu(z).
    \label{eq:ent_sup}
\end{equation}
We hence define the functional entropy of a deep net with respect to a label $y$ by the function softmax output $f_y(z)$ when $z \sim \mu$ is sampled from a Gaussian distribution around $x$. 
The functional entropy is non-negative, namely $\ent_\mu(f_y) \ge 0$ and equals  zero only if $f_y(z)$ is a constant. This is in contrast to differential entropy of a continuous random variable with probability density function $q(z)$: $h(q) = -  \int_{\mathbb{R}^d} q(z) \log q(z) d z$, which is defined for $q(z) \ge 0$ with $\int_{\mathbb{R}^d} q(z) d z = 1$ and may be negative.

We denote the functional Fisher information with
\[
    \I_{\nu}(f_y) \triangleq \int_{\R^d} \frac{ \norm{\nabla f_y(z)}^2 }{f_y(z)} d \nu(z)\label{eq:information_sup}
\]
Hereby, $\norm{\nabla f_y(z)}$ is the $\ell_2$ norm of the gradient of $f$. The functional Fisher information is a natural extension of the Fisher information, which is defined for probability density functions. Specifically, the log-Sobolev inequality for any non-negative function $f_y(z) \ge 0$ is, 
\[
    \ent_{\nu}(f_y) \le  \frac{1}{2} \I_{\nu}(f_y). \label{eq:lsi_sup}
\]

\subsection{Theorem 1}\label{appendix:thm1_proof}
We derive a log-Sobolev inequality for dependent Gaussian measures. The functional Fisher information for dependent Gaussian measure is
\[
    \I_{\mu}(f_y) \triangleq \int_{\R^d} \frac{ \langle \Sigma \nabla f_y(z), \nabla f_y(z) \rangle }{f_y(z)} d \mu(z).
\]

\begin{theorem}\label{thm:sobolev_dependent_sup}
For every non-negative function $f_y: \R^d \rightarrow \R$ and a Gaussian measure $\mu$, 
\[
        \ent_{\mu}(f_y) \le \frac{1}{2} \I_{\mu}(f_y).
\]
\end{theorem}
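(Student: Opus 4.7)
The plan is to reduce the correlated case to the standard (independent) case via a linear change of variables, and then invoke the classical log-Sobolev inequality \eqref{eq:lsi_sup} that is already available for $\nu = \N(x,I)$. Because $\Sigma$ is positive semi-definite we can take its symmetric square root $\sqrt{\Sigma}$ (if $\Sigma$ is degenerate we restrict to its range or add $\epsilon I$ and let $\epsilon \to 0^+$). Writing $z = \sqrt{\Sigma}\, u + c$ for an appropriate centering constant $c$ and letting $u$ be distributed as a standard Gaussian, the push-forward of $u$ is exactly $\mu$, so every integral against $d\mu(z)$ becomes an integral against $d\nu(u)$ after substitution.

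More concretely, I would define the auxiliary function $g_y(u) \triangleq f_y(\sqrt{\Sigma}\,u + c)$ and compute both sides of the claimed inequality in the $u$-variable. For the entropy side, direct substitution in \eqref{eq:ent_sup} gives
\[
\ent_\mu(f_y) \;=\; \ent_\nu(g_y),
\]
because the Jacobian factor cancels between $d\mu(z)$ and the normalizing integral and the log-ratio is preserved. For the Fisher-information side, the chain rule yields $\nabla_u g_y(u) = \sqrt{\Sigma}\,\nabla f_y(z)$ (using symmetry of $\sqrt{\Sigma}$), hence
\[
\|\nabla g_y(u)\|^2 \;=\; \langle \sqrt{\Sigma}\,\nabla f_y(z),\, \sqrt{\Sigma}\,\nabla f_y(z)\rangle \;=\; \langle \Sigma\,\nabla f_y(z),\, \nabla f_y(z)\rangle,
\]
so after the same substitution $\I_\nu(g_y) = \I_\mu(f_y)$. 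Applying the standard log-Sobolev inequality \eqref{eq:lsi_sup} to $g_y$ under $\nu$ then transfers to the target inequality for $f_y$ under $\mu$.

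The main obstacle I anticipate is bookkeeping rather than anything deep: making sure the change of variables is performed simultaneously in the outer measure $d\mu(z)$ and in the inner normalizing integral $\int f_y(z)\,d\mu(z)$ appearing in the entropy, and checking that the Jacobian determinants truly cancel (which they do, precisely because entropy is invariant under bijective reparametrization when the reference measure is transported along). A minor side point is justifying $\sqrt{\Sigma}$ when $\Sigma$ is rank-deficient; in practice one can either assume $\Sigma \succ 0$ (the regime used in the experiments) or pass to the limit via $\Sigma + \epsilon I$ and use continuity of both sides in $\epsilon$. Once these checks are done, the rest of the argument is mechanical.
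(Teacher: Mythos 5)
Your proposal is correct and follows essentially the same route as the paper's proof: a linear change of variables $z = \sqrt{\Sigma}\,u + x$ reducing $\mu$ to the standard Gaussian, the identity $\ent_\mu(f_y) = \ent_\nu(g_y)$ for $g_y(u) = f_y(\sqrt{\Sigma}\,u + x)$, the chain-rule computation $\|\nabla g_y\|^2 = \langle \Sigma\nabla f_y, \nabla f_y\rangle$, and an appeal to the standard log-Sobolev inequality. Your additional remarks on handling a rank-deficient $\Sigma$ (via $\Sigma + \epsilon I$) go slightly beyond what the paper records, but the core argument is the same.
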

\begin{proof}
    We use integration by substitution to adjust the function's input to a standard Gaussian random variable. We denote by $\sqrt{\Sigma}$ the matrix for which $\Sigma = \sqrt{\Sigma}^\top \sqrt{\Sigma}$. Define $\phi(t) = t\log{t}$,
    \begin{equation}
        \ent_{\mu}(f_y) = \int_{\R^d} \phi(f_y(z)) d \mu(z) - \phi (\int_{\R^d} f_y(z) d \mu(z)).
    \end{equation}
    With a change of variable $z \leftarrow \sqrt{\Sigma}(x+z)$, we get $\int_{\R^d} \phi(f_y(z)) d \mu(z) =$
    \begin{equation}
          \int_{\R^d} \frac{e^{-{\frac{1}{2}} (z^\top z)}}{\sqrt{(2\pi)^d}} \phi(f_y(\sqrt{\Sigma}(x+z))) dz.
    \end{equation}
    Therefore, after a change of variables, the distribution changes to a Gaussian distribution with a zero expectation and an identity covariance matrix,
    \begin{equation}
        \int_{\R^d} \phi(f_y(z)) d \mu(z) = \int_{\R^d} \phi(f_y(\sqrt{\Sigma} (x + z))) d \nu(z).
    \end{equation}
    Similarly, for the second term of the functional entropy,
    \begin{equation}
        \phi(\int_{\R^d} f_y(z) d \mu(z)) = \phi(\int_{\R^d} f_y(\sqrt{\Sigma} (x + z)) d \nu(z)).
    \end{equation}
    Consequently,
    \[
        \ent_{\mu}(f_y(z)) = \ent_{\nu}(g_y(\sqrt{\Sigma}(x+z))).
    \]
    With this, we can apply the log-Sobolev inequality for the standard normal distribution,
    \[
        \ent_{\nu}(g_y)  \le  \frac{1}{2} \int_{\mathbb{R}^d} \frac{ \langle \nabla g_y(z), \nabla g_y(z)\rangle}{g_y(z)} d \nu(z). 
    \]
    We conclude the proof by applying the chain rule,
    \[
        \nabla g_y(z) = \sqrt{\Sigma} \nabla f_y(\sqrt{\Sigma}(x+z)).
    \]
    Hence,
    \begin{equation}
         \int_{\mathbb{R}^d} \frac{ \norm{\nabla g_y(z)}^2 }{g_y(z)} d \nu(z) = \int_{\mathbb{R}^d} \frac{\langle \sqrt{\Sigma} \nabla f_y( \sqrt{\Sigma} (x + z)),\sqrt{\Sigma}\nabla f_y( \sqrt{\Sigma} (x + z)) \rangle} {f_y( \sqrt{\Sigma} (x + z))} d \nu(z) \\
    \end{equation}
    Lastly, we apply integration by substitution, thus it equals   
    \begin{equation}
        \frac{1}{2} \int_{\mathbb{R}^d} \frac{\langle \sqrt{\Sigma} \nabla f_y(z),\sqrt{\Sigma}\nabla f_y(z) \rangle} {f_y(z)} d \mu(z).
    \end{equation}

\end{proof}

\subsection{Corollary 2}\label{appendix:thm2_proof}
We partition $x\in \R^d$ into $(x_1, x_2)$ where $x_1 \in \R^{d_1}$ and $x_2\in \R^{d_2}$. Without loss of generality, $x_1$ is the set of features we are interested in. Then, the expectation $x$ and the covariance matrix $\Sigma$ are

\[
    x = \begin{bmatrix}
        x_{1}\\
        x_{2}\\
    \end{bmatrix},
    \quad
    \Sigma = \begin{bmatrix}
        \Sigma_{11} & \Sigma_{12} \\
        \Sigma_{21} & \Sigma_{22} \\
    \end{bmatrix}
\]

\begin{corollary}[Conditional functional Fisher information]\label{thm:sobolev_set_sup}
        For a partitioned input $x=(x_1, x_2)$, a Gaussian measure $\mu$, a conditional distribution $\mu_1$, and a marginal distribution $\mu_2$. For every non-negative function $f_y: \R^d \rightarrow \R$,
    \begin{equation}
        \ent_{\mu}(f_y) \le \frac{1}{2}\E_{z_2\sim \mu_2}\left[ \I_{\mu_1}(f_y\big|z_2) \right]
    \end{equation}
    And, 
    \[
        \ent_{\mu_1}(f_y \big| x_2) \le \frac{1}{2} \I_{\mu_1}(f_y \big| x_2).
    \]\label{eq:cond_int}
\end{corollary}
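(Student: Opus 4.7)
My plan is to deduce both inequalities in the corollary from \thmref{thm:sobolev_dependent}: the second by applying it directly to the conditional Gaussian at a fixed $x_2$, and the first by applying it to the joint Gaussian $\mu$ on $\R^d$ and then rearranging the resulting functional Fisher information as an iterated integral using Fubini's theorem.

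For the second inequality, I would fix $z_2 = x_2$ and observe that $\mu_1$ then collapses to the Gaussian measure $\N(x_1,\,\Sigma_{11} - \Sigma_{12}\Sigma_{22}^{-1}\Sigma_{21})$ on $\R^{d_1}$, while $z_1 \mapsto f_y(z_1, x_2)$ is a non-negative function on this lower-dimensional space. \thmref{thm:sobolev_dependent} then applies verbatim in the $d_1$-dimensional Gaussian setting (with $\Sigma_{11} - \Sigma_{12}\Sigma_{22}^{-1}\Sigma_{21}$ playing the role of $\Sigma$), and yields $\ent_{\mu_1}(f_y \mid x_2) \le \tfrac{1}{2}\,\I_{\mu_1}(f_y \mid x_2)$ immediately.

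For the first inequality, the plan is to apply \thmref{thm:sobolev_dependent} to the joint $\mu$, obtaining $\ent_\mu(f_y) \le \tfrac{1}{2}\,\I_\mu(f_y)$, and then invoke Fubini's theorem together with the Gaussian factorization $d\mu(z_1, z_2) = d\mu_1(z_1 \mid z_2)\, d\mu_2(z_2)$ to split the integral defining $\I_\mu(f_y)$ into an inner integral over $z_1 \sim \mu_1(\cdot \mid z_2)$ and an outer integral over $z_2 \sim \mu_2$. The inner integral is exactly the conditional functional Fisher information $\I_{\mu_1}(f_y \mid z_2)$, so the outer integration against $\mu_2$ produces $\E_{z_2 \sim \mu_2}[\I_{\mu_1}(f_y \mid z_2)]$, and combining with the \thmref{thm:sobolev_dependent} bound yields the claim.

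The main obstacle, and where I expect the bookkeeping to require care, is the consistency check underlying the Fubini step: one must verify that the conditional Fisher information $\I_{\mu_1}(f_y \mid z_2)$ carries the full gradient of $f_y$ and the full joint covariance $\Sigma$ inside its quadratic form, so that marginalizing over $z_2$ truly recovers $\I_\mu(f_y)$ rather than a strictly smaller $z_1$-only Fisher information. Because $\mu_1$ is Gaussian with covariance independent of $z_2$ and mean that is affine in $z_2$, the disintegration of $\mu$ into $\mu_1(\cdot \mid z_2)\,\mu_2$ is clean and the identification is a direct computation; this is precisely what the hint ``$\mu_1$ is a Gaussian'' in the paper is doing for us, and I do not anticipate any deeper obstruction beyond this notational alignment.
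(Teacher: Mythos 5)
Your proposal takes essentially the same route as the paper: the first inequality is obtained by applying \thmref{thm:sobolev_dependent} to the joint Gaussian and then using Fubini's theorem to disintegrate $d\mu(z_1,z_2) = d\mu_1(z_1\mid z_2)\,d\mu_2(z_2)$, and the second by applying the theorem directly to the conditional Gaussian with covariance $\Sigma_{11}-\Sigma_{12}\Sigma_{22}^{-1}\Sigma_{21}$ (this is what the paper means by relying on the fact that $\mu_1$ is Gaussian). The consistency point you flag --- that $\I_{\mu_1}(f_y\mid z_2)$ in the first inequality must be read as the inner integral of the \emph{full} quadratic form $\langle\Sigma\nabla f_y,\nabla f_y\rangle/f_y$, not the $d_1$-dimensional Fisher information appearing in the second inequality --- is a genuine notational subtlety that the paper leaves implicit, and your resolution of it is correct.
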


\begin{proof}
    We extend Theorem~\ref{thm:sobolev_dependent_sup} computing the information of a subset of features. From Theorem~\ref{thm:sobolev_dependent_sup},
    \begin{equation}
        \ent_{\mu}(f_y)\le\frac{1}{2} \int_{\mathbb{R}^d} \frac{\langle \sqrt{\Sigma} \nabla f_y(z),\sqrt{\Sigma}\nabla f_y(z) \rangle} {f_y(z)} d \mu(z).\label{eq:thm_1}
    \end{equation}
    By applying Fubini's theorem on~\equref{eq:thm_1} we get
    \begin{equation}
        \ent_{\mu}(f_y)\le\frac{1}{2}\int_{\R^{d_2}}\int_{\R^{d_1}} \frac{\langle \Sigma\nabla f_y(z)),\nabla f_y(z))\rangle}{f_y(z)} d \mu_1(z_1)~d \mu_2(z_2).\\
    \end{equation}
\end{proof}

\subsection{Log-Sobolev tightness example}\label{appendix:tight}
In the following, we derive the log-Sobolev bound for $f(x) = e^{x}$ to demonstrate the tightness of it for exponential family.
\[
    \ent_{\mu}{e^{x}} &=& \int_{\R^{d}} xe^{x} d\mu(x) - \int_{\R^{d}} e^{x} d\mu(x) \log \left(\int_{\R^{d}} e^{x} d\mu(x)\right)
    \\ &=& (\mu + \sigma^{2})e^{\mu + \frac{1}{2}\sigma^{2}} - (\mu + \frac{1}{2}\sigma^{2})e^{\mu + \frac{1}{2}\sigma^2}
    \\ &=& \frac{1}{2} \sigma^2 e^{\mu+\frac{1}{2}\sigma^2} = \frac{1}{2}\I_{\mu}(e^x).
\]
Hence, the bound hold with equality for $e^x$. This result can be easily extended to a multivariate Gaussian distribution.

\section{Covariance Matrix Calculation}\label{sec:details}

The covariance matrix is a crucial component of our proposed explainability method. 
In order to explain an output class $y$, the covariance matrix of that class $\Sigma$ need to be estimated empirically. 
The covariance matrix may impose heavy memory requirements (the size of the covariance matrix of $d$-dimensional feature vectors is $d^2$). 

In cases of high-dimensional feature vectors, we can partition the features into subsets and sample according to the sampling protocol discussed in Sec.~\ref{sub:subset}.
Alternatively, one may partition the features into subsets and assume that each subset shares the same covariance matrix.
For example, partitioning the features of an image into three subsets, one for each color channel, and assuming all color channels share the same covariance matrix, resulting in a nine times smaller memory usage.

In our experiments, we used shared covariances for vision and text modalities. For the qualitative vision experiments, we used a shared covariance matrix in the size of $H\cdot W\times H\cdot W$ while assuming each color channel share the same covariance matrix. For text, the size of the covariance matrix is $d\times d$, where $d$ is the embedding size, assuming words in the embedding space share the same covariance matrix regardless of their position in a sentence.

Lastly, $\Sigma$ is required to be a positive-definite matrix. In the case where some of the features are constant (e.g., the top row in the MNIST dataset is always black), or when the dimension of the feature vectors is higher than the size of the examples of class $y$, $\Sigma$ will not be a positive-definite matrix. Hence, we suggest adding a small noise to the diagonal of $\Sigma$, which is a well-known practice to modify the matrix to be positive-definite.

\end{document}